\newtheorem{theorem}{Theorem}[section]
\newtheorem{lemma}[theorem]{Lemma}
\newtheorem{corollary}[theorem]{Corollary}
\newtheorem{definition}{Definition}
\begin{document}
% The file aaai.sty is the style file for AAAI Press 
% proceedings, working notes, and technical reports.
%
\title{A Formal Characterization of the Local Search Topology \\of the Gap Heuristic}
\author{Richard Valenzano\\
        Department of Computer Science\\
        University of Toronto\\Toronto, Ontario, Canada\\
        rvalenzano@cs.toronto.edu \and
		Danniel Sihui Yang\\
		Department of Computer Science\\
		University of Toronto\\
		Toronto, Ontario, Canada\\
		dannielyang1996@gmail.com}
\date{}

\maketitle
\begin{abstract}
The pancake puzzle is a classic optimization problem that has become a standard benchmark for heuristic search algorithms.
In this paper, we provide full proofs regarding the local search topology of the gap heuristic for the pancake puzzle.
First, we show that in any non-goal state in which there is no move that will decrease the number of gaps, there is a move that will keep the number of gaps constant.
We then classify any state in which the number of gaps cannot be decreased in a single action into two groups: those requiring $2$ actions to decrease the number of gaps, and those which require $3$ actions to decrease the number of gaps.
\end{abstract}

\section{Background}
In this section, we provide background on the pancake problem and define the notation used in the rest of the paper.

\subsection{Sequences and Permutations}

In this paper, we represent a \textbf{sequence} $\sigma$ of $k$ elements from some set as $\sigma = \langle e_1, ..., e_k \rangle$, where $\sigma[i]$ then refers to the $i$-th element of $\sigma$ (\textit{i.e.} $\sigma[i] = e_i$)\footnote{As is convention in the pancake puzzle literature, the first element of the permutation is at location 1.}. 
If $\sigma' = \langle g_1, ..., g_{k'} \rangle$, we use $\sigma \circ \sigma'$ to denote the \textbf{concatenation} of these sequences: $\langle e_1, ..., e_k, g_1, ..., g_{k'}\rangle$.
A \textbf{permutation} $\pi$ of size $N$ is a sequence of the natural numbers from $1$ to $N$, such that each element in the sequence is unique.

\subsection{The Pancake Puzzle Problem}

An \textbf{$N$-pancake puzzle state} is a stack of $N$ different sized pancakes.
We represent this stack with a permutation of size $N$, where entry $i$ refers to the $i$-th smallest pancake and the order of the numbers in the permutation corresponds to the order of the pancakes in the stack from top to bottom.
For example, $\langle 2, 1, 4, 3\rangle$ represents a $4$-pancake stack in which the second smallest pancake is at the top of the stack.

In any $N$-pancake state, there are $N-1$ applicable \textbf{actions} or \textbf{moves}, denoted by $M_2$, $M_3$, ..., $M_N$.
Action $M_k$, called a \textbf{$k$-flip}, reverses the order of the first $k$ values in the stack.
Where $M_k(\pi)$ denotes the permutation that is the result of applying action $M_k$ to $\pi$, this means that $M_k(\pi)[1] = \pi[k]$, $M_k(\pi)[2] = \pi[k-1]$, and so on. 
For example, $M_3(\langle 2, 1, 4, 3 \rangle) = \langle 4, 1, 2, 3 \rangle$.
\begin{definition}
Given $N$-pancake state $\pi_\mathrm{init}$, the \textbf{$N$-pancake puzzle task} is to find the shortest or \textbf{optimal} sequence of flips that transforms $\pi_\mathrm{init}$ into state $\pi_\mathrm{goal} = \langle 1, 2, ..., N \rangle$.
\end{definition}

\subsection{The Gap Heuristic}

A \textbf{heuristic function} $h$ is a function from the set of states to the set of non-negative real numbers, where $h(\pi)$ is referred to as the heuristic value of $\pi$.
A heuristic $h$ is said to be \textbf{admissible} if for every state $\pi$, $h(\pi) \leq h^*(\pi)$.

The \textbf{gap} heuristic \cite{Helmert:gaps}, which we denote by $h^G$, is most easily formally defined using the \textbf{extended} permutation $\pi^e$ of $\pi$.
$\pi^e$ is defined as $\pi \circ \langle N + 1\rangle$.
The value $N+1$ can be thought of as the plate below the pancake stack, though we often refer to it as the $N+1$-st pancake.
Moreover, due to the one-to-one correspondence between $\pi$ and $\pi^e$ we often refer to $\pi[N + 1]$, the ``$N+1$-st pancake" of $\pi$, or ``location $N+1$" in $\pi$.

For any $j$ where $1 \leq j\leq N$, an \textbf{adjacency} is said to occur in $\pi^e$ between locations $j$ and $j+1$, or between pancakes $\pi^e[j]$ and $\pi^e[j+1]$, if $|\pi^e[j] - \pi^e[j+1]| = 1$.
A \textbf{gap} is said to occur between those locations (or those pancakes) if an adjacency does not occur.
$h^G(\pi)$, is then given by the count of the number of gaps in $\pi^e$:
\begin{align*}
    h^G(\pi) = |\lbrace j \mid 1 \leq j \leq N, |\pi^e[j] - \pi^e[j+1]| > 1 \rbrace|
\end{align*}
Since any action can only add or remove at most one gap and there are no gaps in $\pi_{\mathrm{goal}}$, $h^G$ is admissible.

If action $M_i$ removes a gap when applied to state $\pi$ (\textit{i.e.} $h^G(M_i(\pi)) = h^G(\pi) - 1$), then $M_i$ is called a \textbf{gap decreasing} move in $\pi$.
Similarly, $M_i$ is a \textbf{gap increasing} move if it introduces a gap, while if it replaces one gap with another or one adjacency with another, $M_i$ is a \textbf{gap neutral} move.

We observe that there are always at most two gap decreasing moves in any state.
This is because $M_i$ can only resolve a gap (if one exists) between locations $i$ and $i+1$, if $\pi[1]$ is adjacent to $\pi[i+1]$ in $\pi_{\mathrm{goal}}$, and this is only true if $\pi[i+1] = \pi[1] + 1$ or $\pi[i+1] = \pi[1] - 1$.
However, in many states there are no gap decreasing moves.
These states are said to be \textbf{locked}.

\section{Locked States and Gap Neutral Moves}
In this section, we formally prove that there is a gap neutral move in every non-goal locked state.
We begin by showing that there is at least one gap in any non-goal state.

\begin{lemma}\label{lemma:gaps_and_goal_state}
$\pi$ is a goal state if and only if $h^G(\pi) = 0$.
\end{lemma}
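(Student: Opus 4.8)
The plan is to prove the two directions separately, with the ``only if'' direction being an immediate computation and the ``if'' direction following from a short downward induction on the location index.

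For the ``only if'' direction, I would simply observe that if $\pi = \pi_{\mathrm{goal}} = \langle 1, 2, \ldots, N\rangle$, then $\pi^e = \langle 1, 2, \ldots, N, N+1\rangle$, so $|\pi^e[j] - \pi^e[j+1]| = |j - (j+1)| = 1$ for every $j$ with $1 \leq j \leq N$. Hence there are no gaps in $\pi^e$ and $h^G(\pi) = 0$.

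For the ``if'' direction, suppose $h^G(\pi) = 0$. By the definition of $h^G$, every consecutive pair of locations in $\pi^e$ is an adjacency: $|\pi^e[j] - \pi^e[j+1]| = 1$ for all $1 \leq j \leq N$. I would first note that, since $\pi$ is a permutation of $1, \ldots, N$ and $\pi^e = \pi \circ \langle N+1 \rangle$, the entries of $\pi^e$ are exactly $1, \ldots, N+1$, each occurring once; in particular $\pi^e[N+1] = N+1$. I then prove by downward induction on $j$ that $\pi^e[j] = j$ for all $1 \leq j \leq N+1$. The base case $j = N+1$ holds by the definition of $\pi^e$. For the inductive step, assume $\pi^e[j+1] = j+1$ for some $j$ with $1 \leq j \leq N$; the adjacency condition at location $j$ gives $\pi^e[j] \in \{j, j+2\}$. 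If $j = N$, then $j+2 = N+2$ exceeds $N+1$ and so cannot be an entry of $\pi^e$, forcing $\pi^e[N] = N$. If $j < N$, then $j+2 \leq N+1$ and by the induction hypothesis $\pi^e[j+2] = j+2$; since the entries of $\pi^e$ are distinct, $\pi^e[j] \neq j+2$, so again $\pi^e[j] = j$. This completes the induction, whence $\pi = \langle \pi^e[1], \ldots, \pi^e[N]\rangle = \langle 1, \ldots, N\rangle = \pi_{\mathrm{goal}}$.

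There is no serious obstacle here; the only points requiring care are (i) justifying that $N+1$ genuinely occupies location $N+1$ of $\pi^e$ and that $\pi^e$ has pairwise distinct entries, and (ii) separating out the case $j = N$ in the induction, where the ``$+2$'' alternative is excluded by being out of range rather than by the distinctness argument. Everything else follows directly from the definitions of $\pi^e$, adjacency, gap, and $h^G$ given above.
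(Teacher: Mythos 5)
Your proof is correct and takes essentially the same approach as the paper: a downward induction from the bottom of the stack, using each adjacency to force $\pi^e[j] = j$ once $\pi^e[j+1] = j+1$ is known. The paper leaves the exclusion of the ``$j+2$'' alternative implicit, whereas you spell it out via the range bound at $j=N$ and distinctness for $j<N$; this is just a more careful writeup of the identical argument.
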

\begin{proof}
The fact that if $\pi$ is a goal state, then $h^G(\pi) = 0$ is obvious.
As such, let us assume that $h^G(\pi) = 0$
Then there is no gap between locations $N$ and $N+1$ of $\pi$, which ensures that $\pi[N] = N$.
Similarly, there is no gap between locations $N-1$ and $N$ of $\pi$, which ensures that $\pi[N-1] = N-1$.
This argument can be extended to show that for any $1 \leq i \leq N$, $\pi[i] = i$.
As such, $\pi$ is the goal state.
\end{proof}

We will now use this lemma to show that there is always a gap neutral move in any non-goal locked state.
\begin{theorem}
If $\pi$ is a locked state that is not the goal, then there is a gap neutral move in $\pi$.
\end{theorem}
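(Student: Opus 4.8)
The plan is to pin down exactly how a flip $M_k$ changes the gap count, and then combine the lockedness hypothesis with Lemma~\ref{lemma:gaps_and_goal_state}. The first step is the local observation that $M_k$ changes $h^G$ only through the pair of locations $(k, k+1)$. Indeed, the entries in locations $k+1, \ldots, N$ (and the plate $N+1$) are untouched, so every adjacency among locations $k+1, \ldots, N+1$ is preserved; and the entries in locations $1, \ldots, k$ are merely reversed, so the multiset of adjacency/gap statuses at locations $1, \ldots, k-1$ is preserved, each such status at location $j$ moving to the mirrored location $k-j$. The only status that can genuinely change is the one at location $k$: before the flip it is governed by $|\pi^e[k] - \pi^e[k+1]|$, and afterwards by $|\pi[1] - \pi^e[k+1]|$, since $M_k(\pi)[k] = \pi[1]$. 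Hence $h^G(M_k(\pi)) - h^G(\pi) \in \{-1, 0, 1\}$, and $M_k$ is gap neutral exactly when location $k$ has the same status (gap or adjacency) before and after the flip.

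Next I would find a suitable gap. Since $\pi$ is not the goal, I claim $\pi^e$ has a gap at some location $k$ with $2 \le k \le N$: otherwise locations $2, \ldots, N$ would all be adjacencies, and the induction in the proof of Lemma~\ref{lemma:gaps_and_goal_state} (whose steps each use only an adjacency at a location in $\{2, \ldots, N\}$) would force $\pi[i] = i$ for $i = N, N-1, \ldots, 2$, whence $\pi[1] = 1$ since $\pi$ is a permutation, so that $\pi$ would be the goal --- a contradiction. Fix such a $k$; then $M_k$ is a legal move with a gap at location $k$ beforehand, i.e.\ $|\pi^e[k] - \pi^e[k+1]| > 1$. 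Because $\pi$ is locked, $M_k$ is not gap decreasing, so this gap is not turned into an adjacency, i.e.\ $|\pi[1] - \pi^e[k+1]| \ne 1$. Moreover $k + 1 \ge 3$, so location $k+1$ does not hold $\pi[1]$ and therefore $|\pi[1] - \pi^e[k+1]| \ge 1$; together with $\ne 1$ this gives $|\pi[1] - \pi^e[k+1]| > 1$. So location $k$ is a gap both before and after $M_k$, and by the first step $h^G(M_k(\pi)) = h^G(\pi)$, i.e.\ $M_k$ is a gap neutral move.

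I expect the main obstacle to be the bookkeeping in the first step: carefully verifying that reversing the length-$k$ prefix leaves the number of gaps among locations $1, \ldots, k-1$ unchanged and that all of the genuine change is concentrated at location $k$. Once that reduction is in place, the theorem is essentially a one-line consequence of lockedness. The only other delicate point is the location-$1$ edge case --- no flip acts in isolation on the adjacency between locations $1$ and $2$, so the gap we exploit must be found at a location $\ge 2$ --- and this is exactly what the Lemma~\ref{lemma:gaps_and_goal_state}-style induction above handles.
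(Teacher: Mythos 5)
Your proof is correct, and it takes a genuinely different route from the paper's. The paper argues by cases on the top pancake ($\pi[1] > 1$ versus $\pi[1] = 1$, with a further split on $\pi[2]$) and in each case exhibits a specific gap neutral move, usually one that replaces one adjacency with another by flipping $\pi[1]$ onto a pancake of adjacent size. You instead make explicit the locality fact that a $k$-flip can change the gap/adjacency status only at location $k$ (the paper states this only informally, when arguing admissibility), observe that any non-goal state has a gap at some location $k \geq 2$ (the backward induction of Lemma~\ref{lemma:gaps_and_goal_state}, which, as you correctly note, never needs the status at location $1$ since $\pi[1]=1$ follows from the permutation property), and then let lockedness do the rest: since $M_k$ is not gap decreasing and the only status it can change is the one at location $k$, the gap there must survive the flip, so $M_k$ is gap neutral. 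This yields a uniform, case-free argument that always produces a gap-replacing neutral move; the supporting details (the gap lies at a location $\geq 2$ so $M_k$ is legal, and $\pi[1] \neq \pi^e[k+1]$ so the post-flip difference is a genuine gap rather than zero) are all verified. What the paper's longer case analysis buys is finer information --- it also identifies adjacency-replacing neutral moves where they exist, a distinction that reappears in the case analysis of Theorem~\ref{thm:hard_fg_exit_distance} --- but for the statement as given, your argument is complete and arguably cleaner.
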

\begin{proof}
Let $\pi$ be a locked state that is not the goal.
First, we note that $\pi[1] \neq N$.
This is because if $\pi[N] = N$, then $\pi[N] \neq N$, and so there is a gap between locations $N$ and $N+1$.
Thus, $M_N$ is a gap decreasing move which contradicts the assumption that $\pi$ is locked.
There are now two cases to consider.

\vspace{+0.08in}
\noindent
\textit{Case 1:} $\pi[1] > 1$.

Since $\pi[1] > 1$, $\pi[1] < N$ since otherwise $\pi$ is not locked by the argument above.
As such, let $e$ and $e'$ be the two pancakes that should be beside $\pi[1]$ in the goal state (\textit{i.e.} $|\pi[1] - e| = |\pi[1] - e'| = 1$).
If $\pi[\ell] = e$ and $\pi[\ell'] = e'$, we can assume that $\ell > \ell'$ without loss of generality.
Since $1 < \ell' < \ell$, $\ell > 2$.

Now because $\pi$ is locked, there is an adjacency between locations $\ell - 1$ and $\ell$.
Since $\ell > 2$, $M_{\ell - 1}$ is a valid action.
Moreover, $M_{\ell - 1}$ will move $\pi[1]$ on top of $e$, thereby replacing one adjacency with another.
As such, $M_{\ell -1}$ is a gap neutral move, thus guaranteeing that one such gap neutral move exists in $\pi$ in this case.

\vspace{+0.08in}
\noindent
\textit{Case 2:} $\pi[1] = 1$.

There are now two subcases to consider.
First, $\pi[2] \neq 2$.
This means that $\pi[\ell] = 2$ for some $\ell > 2$ and so $M_{\ell - 1}$ is a valid action.
This action will simply replace one adjacency with another by the same argument as in Case 1, and so the statement holds in this case.

Now suppose that $\pi[2] = 2$.
Since $\pi$ is not a goal state, there must exist a gap between some two locations $\ell$ and $\ell+1$ by Theorem \ref{lemma:gaps_and_goal_state}.
Because there $\pi[1] = 1$ and $\pi[2]=2$, there is no gap between locations $1$ and $2$ in $\pi$, and so $\ell \geq 2$.
Thus, $M_\ell$ is a valid move.
Since $\pi[2] = 2$, this means that $\pi[\ell+1] \neq 2$ and so $M_{\ell}(\pi)$ will also have a gap between locations $\ell$ and $\ell + 1$.
Thus, $M_\ell$ is a gap neutral move in $\pi$ which replaces one gap with another.
As such, there is a gap neutral move in $\pi$ in this case.

\vspace{+0.08in}
Having handled all cases, the statement holds.
\end{proof}

\section{Topology of the Gap Heuristic}\label{sec:topology}
In this section, we extend the work of \citeauthor{Fischer:two_approx} \shortcite{Fischer:two_approx} and provide a classification of states according to the size of the plateaus around them.
To simplify this analysis, we assume that in all states, there is gap between locations $N$ and $N+1$. 
Doing so removes the postfix of a state if it is already sorted, since this portion of the state will have no impact on the number of gaps or the optimal solution cost.
For example, where $\pi = \langle 2, 1, 4, 3 \rangle$ and $\pi' = \langle 2, 1, 4, 3, 5, 6, 7 \rangle$, clearly $h^G(\pi) = h^G(\pi')$ and $h^*(\pi) = h^*(\pi')$.

We begin with some additional notation.
Following \citeauthor{Hoffmann:topology} \shortcite{Hoffmann:topology}, a \textbf{plateau} for $h$ is a connected set of one or more states that all have the same heuristic value.
An \textbf{exit} from a plateau with heuristic value $\ell$ is a state $\pi$ such that $h(\pi) = \ell$ and there is some neighbour $\pi'$ of $\pi$ such that $h(\pi') < h(\pi)$.
The \textbf{exit distance} of $h$ from a state $\pi$ is the minimum number of actions needed to reach an exit.
Note that this means than any exit has an exit distance of $0$.

We also say that consecutive locations $i, i+1, ..., i + j$ in a permutation $\pi$ is a \textbf{strip} of size $j + 1$ if there are no gaps between the pancakes in those locations, and that sequence of locations is maximal (\textit{i.e.} on either side of the strip there is a gap or the end of the permutation).
A strip of size $2$ or more is \textbf{descending} if $\pi[i] > \pi[i+1] > ... > \pi[i+j]$, and \textbf{ascending} otherwise.
Two strips from $i$ to $i+j$ and $i'$ to $i' + j'$ where $i \leq i+j < i' \leq i'+j'$ are \textbf{in order} if the pancakes in the  strip from $i$ to $j$ are smaller than the pancakes in the strip from $i'$ to $j'$.
The \textbf{first strip} is the one starting at location $1$, and the \textbf{rightmost} strip is the one ending at location $N$.
Where $\ell$ is the size of the first strip of a state $\pi$ that has at least two strips, the \textbf{second strip} starts at location $\ell + 1$. 
The remaining strips are named similarly.
For example, $\langle 1, 2, 3, 5, 4 \rangle$ has two strips: the first strip is an ascending strip of size $3$ from locations $1$ to $3$, and a descending strip of size $2$ from location $4$ to $5$.
The latter strip is the second or rightmost one, and the two strips are in order.

We now define the following family of states:
\begin{definition}
$\pi$ is a \textbf{Fischer-Ginzinger (FG)} state if and only if $\pi$ has at least two strips, and all strips in $\pi$ are descending, have a size of at least two, and are in order.
\end{definition}
For example, $\langle 3, 2, 1, 5, 4 \rangle$ is an FG state, while $\langle 1, 2, 4, 3 \rangle$ and $\langle 2, 1, 3, 5, 4 \rangle$ are not FG states since they have an ascending strip and strip of size 1, respectively.

We can now characterize states according to their exit distance.
First, we notice that any state in which there is a gap decreasing move has an exit distance of $0$ by definition.
For locked states, consider the following corollary of Lemma 5 from Fischer and Ginzinger \shortcite{Fischer:two_approx}:
\begin{corollary}\label{cor:locked_exit_distance}
The exit distance of $h^G$ for any locked state that is not an FG state is 1.
\end{corollary}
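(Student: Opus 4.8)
The plan is to prove the two bounds $\mathrm{ed}(\pi) \ge 1$ and $\mathrm{ed}(\pi) \le 1$ separately, where $\mathrm{ed}$ denotes the exit distance of $h^G$. The lower bound is immediate from the definitions: a locked state has no gap decreasing move, hence no neighbour $\pi'$ with $h^G(\pi') < h^G(\pi)$, so $\pi$ is not itself an exit and its exit distance is therefore at least $1$. All of the content thus lies in the upper bound --- exhibiting a single move $M_k$ for which $M_k(\pi)$ is an exit, equivalently a single move after which a gap decreasing move becomes available.

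For the upper bound I would appeal to Lemma 5 of Fischer and Ginzinger \shortcite{Fischer:two_approx}, after setting up the dictionary between their sorting-by-prefix-reversals framework and ours. Under the standing assumption of this section that there is a gap between locations $N$ and $N+1$ (equivalently, that the sorted postfix of $\pi$ has been removed), $h^G(\pi)$ equals their number of breakpoints; a gap decreasing move is exactly a breakpoint-reducing prefix reversal; a locked state is a permutation admitting no breakpoint-reducing prefix reversal; and the FG states are precisely the exceptional permutations excluded in their statement. Lemma 5 then guarantees that for every non-exceptional locked permutation two moves suffice to decrease the number of gaps. Since $\pi$ is locked, its first move cannot be gap decreasing, and since no move changes $h^G$ by more than one, a net decrease over two moves forces the first move to be gap neutral and the second to be gap decreasing; hence the intermediate state $M_k(\pi)$ is itself an exit, giving $\mathrm{ed}(\pi) \le 1$. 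Combined with the lower bound, this yields $\mathrm{ed}(\pi) = 1$.

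The main obstacle is not any new combinatorial argument but making the above dictionary precise. In particular I would verify that the class of FG states defined here --- at least two strips, all descending, each of size at least two, and in order --- coincides exactly with the exceptional class appearing in Lemma 5 of \citeauthor{Fischer:two_approx}, and that deleting a sorted postfix never changes membership in that class. I would also check the boundary situations: a first strip of size one, the pancake $1$ or the pancake $N$ at the top, and states with exactly two strips, so as to confirm that the prefix reversal produced by their lemma is a legal action $M_k$ with $2 \le k \le N$ and that the passage from ``two moves decrease $h^G$'' to ``one move reaches an exit'' goes through in each case. Once these correspondences are checked, the corollary follows immediately.
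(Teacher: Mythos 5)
Your proposal matches the paper's treatment: the paper likewise presents this as a direct corollary of Lemma 5 of Fischer and Ginzinger, who supply two-action gap-decreasing sequences for every non-FG locked state, and the exit-distance-equals-one conclusion follows exactly as you argue (locked gives the lower bound; the first of the two moves must be gap neutral, so its result is an exit). Your added care about the breakpoint/gap dictionary and the neutral-then-decreasing decomposition is a correct and slightly more explicit version of the same argument.
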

Fischer and Ginzinger proved this by providing appropriate sequences of actions that could decrease the number of gaps for all possible cases of non-FG locked states.
Fischer and Ginzinger also provided a method for sorting any FG state $\pi$ using at most $2\cdot h^G(\pi)$ actions.
This method always decreases the number of gaps in $\pi$ in $3$ actions, thus guaranteeing that the exit distance of any FG state is at most $2$.
However, this method does not show that this is always necessary, and thus does not provide a characterization of FG states according to their exit distance.

To provide such a characterization, we define an \textbf{easy FG state} as an FG state with exactly $2$ strips such that the rightmost strip has a size of $2$.
We can now show the following:

\begin{theorem}\label{thm:easy_fg_exit_distance}
If $\pi$ is an easy FG state, then the exit distance of $h^G$ for $\pi$ is $1$ and $h^*(\pi)=3$.
\end{theorem}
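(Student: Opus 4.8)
The plan is to first pin down exactly what an easy FG state looks like, then read off everything from that explicit form. Since an easy FG state $\pi$ has exactly two strips, both descending, both of size at least two, in order, with the rightmost of size $2$, and since (by the standing assumption of this section) there is a gap between locations $N$ and $N+1$, the first strip must be an $\ell$-element descending strip with $\ell \ge 2$ and the second must be a $2$-element descending strip; being in order forces the first strip to hold the $\ell$ smallest pancakes and the second to hold $\{\ell+1,\ell+2\}$. Hence $\pi = \langle \ell, \ell-1, \dots, 2, 1, \ell+2, \ell+1\rangle$ with $N = \ell+2$, and $h^G(\pi) = 2$ (one gap at the boundary between the two strips and one at the boundary with the plate). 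I expect this structural step to be the only mildly delicate part; everything afterward is a short computation with explicit move sequences.

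Next I would confirm $\pi$ is locked, which gives exit distance at least $1$ and is also the key fact for the lower bound on $h^*$. The pancakes that are adjacent to $\pi[1]=\ell$ in $\pi_{\mathrm{goal}}$ are $\ell-1$ and $\ell+1$; the former sits at location $2$, so the only move that could exploit it is the illegal $M_1$, and the latter sits at location $N$, but the boundary between locations $N-1$ and $N$ is already an adjacency (pancakes $\ell+2$ and $\ell+1$), so $M_{N-1}$ is gap neutral, not gap decreasing. Thus no gap decreasing move exists. For the exit distance, Fischer and Ginzinger's sorting procedure already gives an upper bound of $2$ on any FG state; to sharpen it to $1$ I would exhibit a single gap neutral move landing on an exit. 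Applying $M_{N-1}$ yields $\langle \ell+2, 1, 2, \dots, \ell, \ell+1\rangle$, which still has two gaps but whose top pancake is $\ell+2$ and whose plate is $\ell+3$, so $M_N$ creates the adjacency between $\ell+2$ and the plate and is gap decreasing. Hence $M_{N-1}(\pi)$ is an exit and the exit distance of $\pi$ equals $1$.

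Finally, for $h^*(\pi)=3$, the upper bound comes from verifying that the sequence $M_{N-1}, M_N, M_{N-1}$ sorts $\pi$: after the first two moves the state is $\langle \ell+1, \ell, \dots, 1, \ell+2\rangle$, and the last $M_{N-1}$ reverses its first $N-1$ entries into $\langle 1, 2, \dots, \ell+1, \ell+2\rangle = \pi_{\mathrm{goal}}$. The lower bound follows from two observations: admissibility of $h^G$ gives $h^*(\pi) \ge h^G(\pi) = 2$, and $h^*(\pi)=2$ is impossible because each action changes $h^G$ by at most one, so a length-$2$ optimal solution would have to pass through heuristic values $2 \to 1 \to 0$ and therefore begin with a gap decreasing move, contradicting the fact that $\pi$ is locked. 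Combining the bounds gives $h^*(\pi)=3$, completing the proof.
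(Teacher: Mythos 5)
Your proof is correct and follows essentially the same route as the paper: derive the explicit form $\langle N-2,\dots,1,N,N-1\rangle$, show the sequence $M_{N-1}, M_N, M_{N-1}$ sorts it (with $M_{N-1}(\pi)$ already an exit), and use lockedness plus $h^G(\pi)=2$ to get the matching lower bound $h^*(\pi)\geq 3$. The only difference is that you verify lockedness of the easy FG state directly rather than invoking the general fact that all FG states are locked, which makes your argument slightly more self-contained.
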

\begin{proof}
Let $\pi$ be an easy FG state.
Since all FG states are locked, the exit distance of $\pi$ is at least $1$.
$\pi$ will also necessarily have the following form $\langle N-2, ..., 1, N, N-1 \rangle$.
As such, $\pi$ has two gaps, one of which can be removed by applying $M_{N-1}$ and then $M_N$ to reach state $\pi' = \langle N-1, N-2,..., 1, N \rangle$.
Applying $M_{N-1}$ and then $M_N$ will lead to state $\pi' = \langle N-1, N-2,..., 1, N \rangle$.
Since $\pi'$ has a single gap, the exit distance of $\pi$ is $1$.

Applying $M_{N-1}$ to $\pi'$ completes the sort of $\pi$, for a solution to $\pi$ with cost $3$.
Since the fact that $\pi$ is locked means that $h^*(\pi) \geq h^G(\pi) + 1 \geq 3$, this solution is optimal.
\end{proof}

Let us now contain the remaining FG states, which we call \textbf{hard FG states}.
We now show the following:
\begin{theorem}\label{thm:hard_fg_exit_distance}
The exit distance of $h^G$ is $2$ for any hard FG state. 
\end{theorem}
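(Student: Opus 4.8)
The plan is to establish the two bounds on the exit distance of a hard FG state $\pi$ separately. The upper bound is already available: as observed just before the theorem, Fischer and Ginzinger's sorting procedure lowers the gap count of any FG state in three moves while leaving it unchanged for the first two, so after two gap-neutral moves one already stands on an exit. (Explicitly, write $\pi = S_1 \circ S_2 \circ \dots \circ S_m$ for the decomposition of $\pi$ into its $m \ge 2$ descending strips, put $a_j = |S_j|$ and $b_j = a_1 + \dots + a_j$, and let $\overline{S_j}$ denote the ascending run on the values of $S_j$; then $M_{b_1}$ followed by $M_{b_2}$ are both gap-neutral and produce $\overline{S_2} \circ S_1 \circ S_3 \circ \dots \circ S_m$, in which $M_{a_2}$ is gap-decreasing because the new top pancake $b_1+1$ is adjacent to the pancake $b_1$ sitting at location $a_2+1$.) Hence the entire content of the theorem is the lower bound: no single gap-neutral move out of a hard FG state lands on an exit.

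To prove the lower bound I would first determine precisely which moves are gap-neutral in an arbitrary FG state $\pi = S_1 \circ \dots \circ S_m$. Because $M_i$ changes only the adjacency at the seam between locations $i$ and $i+1$ (the reversed prefix keeps all its internal adjacencies), $M_i$ is gap-neutral exactly when that seam retains its type. The seams that are currently gaps occur exactly at the strip boundaries $i \in \{b_1, \dots, b_{m-1}, b_m = N\}$; for each of these the new seam pair $(\pi[1], \pi[i+1]) = (b_1, \pi[i+1])$ is again a gap, since $\pi[i+1]$ is the largest pancake of a later strip or the plate $N+1$, all at distance at least $2$ from $b_1$. The seams that are currently adjacencies are the interior seams of the strips, and flipping there is gap-neutral only if $\pi[i+1] = \pi[1] \pm 1 = b_1 \pm 1$; the value $b_1 - 1$ is at location $2$ (which would call for the non-move $M_1$), while the value $b_1 + 1$ --- the \emph{smallest} entry of the \emph{descending} strip $S_2$ --- is at location $b_2$, giving the move $M_{b_2 - 1}$. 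So the gap-neutral moves of any FG state are exactly $M_{b_1}, \dots, M_{b_m}$ together with $M_{b_2 - 1}$.

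The crux is then to check, for each of these $m+1$ moves, that the resulting state $\rho$ is still locked; recall that $\rho$ is locked unless some valid move $M_k$ both sits on a gap of $\rho$ and has $\rho[k+1]$ equal to $\rho[1]-1$ or $\rho[1]+1$ (allowing $\rho[k+1] = \rho[N+1] = N+1$). For $\rho = M_{b_j}(\pi)$, whose prefix has been flipped to $\overline{S_j} \circ \overline{S_{j-1}} \circ \dots \circ \overline{S_1}$, we have $\rho[1] = b_{j-1}+1$, with $\rho[1]-1 = b_{j-1}$ lying at the right end of $\overline{S_{j-1}}$ (preceded by $b_{j-1}-1$, an adjacency, not a gap) and $\rho[1]+1 = b_{j-1}+2$ lying at location $2$ (so only $M_1$, not a move); and since $b_{j-1}+1 \le N-1$, $M_N$ is not gap-decreasing on $\rho$ either --- so $\rho$ is locked. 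For $\rho = M_{b_2-1}(\pi)$, which leaves the last pancake $b_1+1$ of $S_2$ in place and flips the rest of the prefix, we have $\rho[1] = b_1+2$, with $\rho[1]-1 = b_1+1$ again lying just after an adjacency and $\rho[1]+1 = b_1+3$ lying at location $2$ or at the tail of a strip; hence $\rho$ is locked \emph{unless} $\rho[1] = b_1+2 = N$, which would make $M_N$ gap-decreasing. But $b_1+2 = N$ means $a_1 = N-2$, forcing $a_2 + \dots + a_m = 2$ and hence $m = 2$, $a_2 = 2$ --- so $\pi$ would be an easy FG state. As $\pi$ is hard, this cannot happen, so every gap-neutral neighbour of $\pi$ is locked; since $\pi$ is itself locked (being FG), its exit distance is at least $2$, and together with the upper bound it equals $2$.

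I expect the move-by-move verification to be the main obstacle, chiefly because the corner cases must be handled uniformly: when $m = 2$ there is no strip $S_3$ and the seam after $S_2$ abuts the plate; when $j = 1$ there is no strip $S_0$, so $\rho[1] - 1 = 0$ does not exist and that branch of the argument is vacuous; and the characterization of the gap-neutral moves relies on $S_2$ being descending, which is exactly what places the value $b_1 + 1$ at location $b_2$ rather than at location $b_1 + 1$ --- missing this would drop the move $M_{b_2-1}$ and, with it, the entire distinction between easy and hard FG states (indeed the analysis above re-derives that easy FG states have exit distance $1$).
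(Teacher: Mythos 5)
Your proposal is correct and follows essentially the same strategy as the paper: an explicit three-move sequence whose first two moves are gap-neutral gives the upper bound, and the lower bound is obtained by checking that every gap-neutral neighbour of a hard FG state is locked (the paper's Cases 2 and 3 are exactly your moves $M_{b_1},\dots,M_{b_m}$ and $M_{b_2-1}$, with gap-increasing moves dismissed because they leave the plateau). The only differences are in the details: you merge the first two strips ($M_{b_1}, M_{b_2}, M_{a_2}$) where the paper rotates the rightmost strip to the bottom ($M_N, M_{a_m}, M_N$), and your up-front enumeration of the $m+1$ gap-neutral moves replaces the paper's case split on the shape of the second strip, which has the nice effect of localizing the use of the ``hard'' hypothesis to the single subcase $b_1+2=N$ of the move $M_{b_2-1}$.
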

\begin{proof}
Let $\pi$ be a hard FG state, and let $\ell \geq 2$ be the size of the rightmost strip of $\pi$.
Then $\pi$ is of the form $\langle e_1, ..., e_{N-\ell}, N, N-1, ..., N -\ell + 1 \rangle$, where $e_1$ to $e_{N-\ell}$ are pancakes in the range from $1$ to $N-\ell$.
Consider following sequence of moves: $M_N$, $M_\ell$, and then $M_N$.
The result of $M_N$ will be $\langle N -\ell + 1, ..., N-1, N, e_{N-\ell}, ..., e_1 \rangle$.
$M_{\ell}(M_N(\pi))$ will then be $\langle N, N-1, ..., N-\ell _ 1, e_{N-\ell}, ..., e_1 \rangle$.
The final application of $M_N$ will lead to the state $\langle e_1, ..., e_{N-\ell}, N -\ell + 1, ..., N-1, N \rangle$.
The gap between locations $N$ and $N+1$ in $\pi$ is now gone, while all other gaps remain.
Thus, $h^G(M_N(M_{N-1}(M_N(\pi)))) = h^G(\pi) - 1$, and so the exit distance of $\pi$ is at most $2$.

Let us now show that the exit distance of $\pi$ is greater than $1$.
Since $\pi$ is locked, the exit distance is at least $1$.
We will now show that for any $M_i$, $M_i(\pi)$ cannot be an exit.
There are three main cases to consider.

\vspace{+0.08in}
\noindent
\textit{Case 1:} $M_i$ is a gap increasing move.

Because of the consistency of $h^F$ and $h^G(M_i(\pi)) = h^G(\pi) + 1$, no neighbour of $M_i(\pi)$ can have fewer than $h^G(\pi)$ gaps.
As such, $M_i(\pi)$ cannot be an exit in this case.

\vspace{+0.08in}
\noindent
\textit{Case 2:} $M_i$ is a gap neutral move that replaces one gap with another.

Since the only gaps are between the strips, this means that location $i$ must be the end of some strip.
Suppose that it is at the end of the first strip (\textit{i.e.} $i$ is the length of the first strip).
In this case, $\pi$ is of the form $\langle i, i-1, ..., 1, ... \rangle$ and $M_i(\pi)$ will be $\langle 1, 2, ..., i ... \rangle$.
Pancake $1$ is already on top of $2$, and so $M_i(\pi)$ is locked and thus cannot be an exit.

Now suppose that the size of the first strip is smaller than $i$.
Let $\ell$ be the size of the strip ending at location $i$ and let $\ell'$ be the size of the strip right before that strip.
Then $\pi$ is of the following form:
\begin{align*}
    \langle ...,  i - \ell, i - \ell - 1, ..., i - \ell - \ell' + 1, i, ... ,  i- \ell + 2, i - \ell + 1...\rangle
\end{align*}
In this state, pancake $i - \ell + 1$ is in location $i$.
Then $M_i(\pi)$ will be as follows:
\begin{align*}
    \langle i - \ell + 1, i - \ell + 2, ...,  i, i - \ell - \ell' + 1, ..., i - \ell - 1, i - \ell, ... \rangle
\end{align*}
Sinze $i - \ell + 1$ is the top pancake, the only possible locations for gaps that can be removed in $M_i(\pi)$ are above pancakes $i - \ell + 2$ and $i - \ell$.
However, these pancakes do not have gaps above them.
As such, $M_i(\pi)$ is locked and is not an exit.
Therefore, $M_i(\pi)$ cannot be an exit if $M_i$ is a gap neutral move that replaces one gap with another.

\vspace{+0.08in}
\noindent
\textit{Case 3:} $M_i$ is a gap neutral move that replaces one adjacency with another. 

Let $\ell$ and $\ell'$ be the sizes of the first and second strips in $\pi$.
By the definition of an FG state, $\pi$ is of the following form:
\begin{align*}
    \langle \ell, \ell-1,..., 1, \ell + \ell', .., \ell + 2, \ell + 1, ... \rangle
\end{align*}
In this state, $\ell + 1$ is in location $\ell + \ell'$.
Since the top pancake $\ell$ is already adjacent to pancake $\ell-1$, the only gap decreasing move that replaces one adjacency with another is $M_{\ell + \ell' - 1}$ (\textit{i.e} $i = \ell +\ell' -1$).
Applying this action to $\pi$ results in the following:
\begin{align*}
    \langle \ell + 2, ... \ell+\ell', 1, ... \ell-1, \ell, \ell + 1, ... \rangle
\end{align*}
Because $\pi$ is a hard FG state, it is guaranteed to have at least $3$ strips or a second strip with a size at least $3$.
If the second strip has a size of at least $3$, then the second strip in $\pi$ ends in $\ell + 3, \ell + 2, \ell + 1$ and so $M_i(\pi)$ has the following form:
\begin{align*}
    \langle \ell + 2, \ell + 3, ... \ell+\ell', 1, ... \ell-1, \ell, \ell + 1, ... \rangle
\end{align*}
Since $\ell + 2$ is already beside $\ell + 3$ and $\ell$ is on top of $\ell+1$, there is no gap decreasing move in $M_i(\pi)$. 
As such, $M_i(\pi)$ is locked and so it cannot be an exit.

If the second strip of $\pi$ is of size $2$ (\textit{i.e.} $\ell' = 2$), then there are at least $3$ strips.
Let $\ell''$ be the size of the third strip.
$\pi$ will necessarily have the following form:
\begin{align*}
    \langle \ell, \ell - 1,..., 1, \ell + 2, \ell + 1, \ell + 2 + \ell'', ..., \ell + 4, \ell + 3, ... \rangle
\end{align*}
We can now see that $M_{\ell + \ell' - 1}(\pi)$ has the following form:
\begin{align*}
    \langle \ell + 2, 1, ..., \ell - 1, \ell, \ell + 1, \ell + 2 + \ell'', ..., \ell + 4, \ell + 3, ... \rangle
\end{align*}
$\ell + 1$ and $\ell + 3$ are both below pancakes that they should be adjacent to, so this state remains locked and thus cannot be an exit.
Thus, $M_i(\pi)$ is not an exit whenever $M_i$ is a gap neutral move that replaces one adjacency with another. 

\vspace{+0.08in}
Since $M_i(\pi)$ cannot be an exit in all cases, the exit distance of $\pi$ is at least $2$.
This completes the proof.
\end{proof}

The results above show that the exit distance of any pancake state is at most $2$.

\section{Conclusion}
In this work, we have provided the complete proofs underlying a characterization of the topology of the gap heuristic.
First, we showed that there is a gap neutral move in any non-goal locked state.
Then, we provided a classification that organizes states into whether the number of gaps can be decreased in $1$, $2$, or $3$ actions.

\bibliography{refs}
\bibliographystyle{aaai}

\end{document}